\newtheorem{theorem}{Theorem}
\newtheorem{lemma}{Lemma}
\newtheorem{definition}{Definition}
\theoremstyle{definition}
\newtheorem{remark}{Remark}
\newcommand{\BibTeX}{B\kern-.05em{\sc i\kern-.025em b}\kern-.08em\TeX}
\DeclareMathOperator*{\argmax}{argmax}
\title{Towards Privacy-Aware Bayesian Networks: A Credal Approach}
\author[1,2]{Niccolò Rocchi}
\author[1,*]{Fabio Stella}
\author[3]{Cassio de Campos}
\affil[1]{University of Milano-Bicocca, Italy}
\affil[2]{Fondazione IRCCS Istituto Nazionale dei Tumori, Italy}
\affil[3]{Eindhoven University of Technology, the Netherlands}
\affil[*]{Corresponding author. Email: fabio.stella@unimib.it}
\date{}
\begin{document}

\maketitle

\begin{abstract}
    \textit{Bayesian networks} (BN) are versatile probabilistic graphical models that enable efficient knowledge representation and inference. These models have proven effective across diverse domains, including healthcare, bioinformatics, economics, law, and image processing. The structure and parameters of a BN can be obtained by domain experts or directly learned from available data. However, as privacy concerns escalate, it becomes increasingly critical for publicly released models to safeguard sensitive information in training data. Typically, released models do not prioritize privacy by design, and the issue equally affects BNs. In particular, \textit{tracing attacks} from adversaries can combine the released BN with auxiliary data to determine whether specific individuals belong to the data from which the BN was learned. The current approach to addressing this privacy issue involves introducing noise into the learned parameters. While this method offers robust protection against tracing attacks, it also significantly impacts the model's utility, in terms of both the significance and accuracy of the resulting inferences. Hence, high privacy may be attained, but at the cost of releasing a possibly ineffective model. This paper introduces \textit{credal networks} (CN) as a novel and practical solution for balancing the model's privacy and utility. Specifically, after adapting the notion of tracing attacks, we demonstrate that a CN enables the masking of the learned BN, thereby reducing the probability of successful tracing attacks. As CNs are obfuscated but not noisy versions of BNs, they can achieve meaningful inferences while safeguarding the privacy of the released model. Moreover, we identify key learning information that must be concealed to prevent attackers from recovering the BN underlying the released CN. Finally, we conduct a set of numerical experiments to analyze how privacy gains can be modulated by tuning the CN hyperparameters. Our results confirm that CNs provide a principled, practical, and effective approach towards the development of privacy-aware probabilistic graphical models.
\end{abstract}

\section{Introduction}

    Experts across various domains rely on \textit{artificial intelligence} (AI) systems to support their daily work and critical decision-making. 
    Unlike opaque AI methods, transparent models such as \textit{Bayesian networks} (BN) \cite{Koller_2010} inspire greater trust and encourage broader adoption, providing rich insights into the data generation process. 
    A BN consists of (i) a graph whose nodes are associated with random variables and whose edges encode conditional (in)dependencies among the variables, and (ii) a compact set of parameters that allow for representing the variables' joint probability distribution efficiently. 
    When interpreted in \textit{causal} terms \cite{Pearl_1995}, a BN becomes a powerful tool to model cause-effect relationships, essential for understanding how decisions propagate towards different outcomes \cite{Bernasconi_2024,Feuerriegel_2024}. 
    These reasons have fueled the widespread adoption of BNs in fields where statistical and causal reasoning are central, including oncology \cite{Grube_2022}, neuroscience \cite{Roy_2023}, and genetics \cite{Suter_2022}.
    Learning a BN involves two main tasks: recovering its graphical structure and estimating its parameters \cite{Scanagatta_2019}. These tasks can be accomplished using data alone, by leveraging the expertise of domain specialists, or through a combination of both methods \cite{Constantinou_2023}.
    Even more, multi-center studies are beneficial for overcoming the limitations of individual data centers, such as data scarcity and biases \cite{Zanga_2023}.
    However, strict privacy regulations often prohibit the direct sharing of individual data across institutions. 
    As a result, \textit{federated learning} \cite{Zhang_2021} approaches emerged. A typical scenario involves all centers learning their own BNs locally and then sharing them with a global aggregator to develop a \textit{global} model \cite{Torrijos_2025}.
    
    Releasing the learned BN can still leave sensitive information vulnerable \cite{Murakonda_2021}. Adversaries who combine the model with auxiliary data (for example, public population registries) can develop \textit{tracing attacks} \cite{Carlini_2022}, which determine whether specific individual records were included in the training data set, i.e., the data used to learn the BN.
    Existing privacy-preserving strategies for BNs typically rely on adding random noise to the network parameters. In this case, the obtained model is said to be \textit{sanitized}. Typical choices for the noise include the Laplacian random perturbation \cite{Zhang_2017}, which has been proven to ensure \textit{differential privacy} \cite{Dwork_2017}. 
    Although differential privacy provides robust protection against tracing attacks, perturbation techniques can lead to inconsistencies in the BN marginal distributions when not properly managed \cite{Barak_2007}. Furthermore, the magnitude of the injected noise can be substantial, yet unnoticeable in the sanitized BN. Consequently, the released model may not accurately represent the underlying data, leading to potentially unreliable inferences \cite{Binkyte_2024}.
    Murakonda et al. \cite{Murakonda_2021} established an approximation of the effectiveness of tracing attacks on BNs. They demonstrated that, for any given error rate, the attacker's power is dependent on the network complexity and the sample size.
    Nevertheless, finding a practical way to \textit{balance} the model's privacy and utility remains a challenging, short-blanket problem that is not adequately addressed in the BN literature \cite{Rigaki_2023, Shao_2023}. 

    In this work, we build on this gap to introduce a novel approach to the privacy-utility balance problem in BNs, inspired by the relationship between privacy and Walley's \textit{imprecise probability} theory \cite{Walley_2000} pointed out in the recent research \cite{Bailie_2024,Li_2022}.
    In particular, we leverage \textit{credal networks} (CN) \cite{Cozman_2000}, an enhanced version of BNs that incorporates epistemic uncertainty into their parameters.
    We theoretically and empirically show that releasing a CN as the \textit{masked} version of a BN provides a robust defense against tracing attacks. 
    The parameters of a CN are represented as intervals, or more generally as sets, which enclose the point estimates of the ground-truth BN parameters. Therefore, they can yield \textit{guaranteed} inferential bounds using diverse (exact or approximate) algorithmic approaches \cite{ANTONUCCI201525,deCampos_2007,MAUA2020133}. 
    While the inferential bounds may sometimes be quite broad, potentially leading to ``I don't know'' conclusions, they still ensure correct responses to queries as opposed to noisy BNs. Consequently, CNs are valuable for offering a more usable and explainable form of privacy \cite{Bullek_2017}.
    Moreover, by controlling the level of uncertainty injected into the BN, a CN can practically balance the privacy level and the uncertainty of inferences. 
    
    The paper first introduces the concepts and notation of BNs and CNs (\Cref{sec:preliminaries}), followed by a formal definition of a tracing attack against a BN, along with known theoretical results (\Cref{sec:prob_stat}).    
    The notion of tracing attack is then extended to CNs while proving its consistency (\Cref{sec:attack_CN}). \Cref{sec:pr_g} proves, in the sample size limit, that releasing a CN ensures equal or higher privacy than releasing a BN (\Cref{th:power}). Finally, \Cref{sec:exp} presents and discusses an extensive set of numerical experiments that simulate tracing attacks on different networks. Results suggest that the privacy gained by a CN may even exceed the theoretical bound established in \Cref{th:power}.

\section{Probabilistic Graphical Models}
\label{sec:preliminaries}

    \begin{table}[t]
        \caption{Notation adopted throughout the paper; ``distrib.'' stands for distribution, while ``param.'' for parameters.\label{tab:notation}}
        \centering
        \begin{tabular}{l|l}
            \hline
            \textbf{Symbol} & \textbf{Meaning} \\
            \hline
            $|\mathcal{S}|$                         & The cardinality of a set $\mathcal{S}$.                  \\
            $\mathbf{X}$                            & Discrete random variables having support $\text{supp}(\mathbf{X})$.                  \\
            $P(\mathbf{X}\mid\theta_0)$             & Multinomial distrib. over $\text{supp}(\mathbf{X})$, parametrized by $\theta_0$.                                  \\
            $p(\cdot\mid\theta_0)$                  & Probability mass function, $p(\mathbf{x}\mid\theta_0) = P(\mathbf{X}=\mathbf{x}\mid\theta_0)$.              \\
            $(\mathcal{G}, \theta_0)$               & BN underlying $P(\mathbf{X}\mid\theta_0)$.                               \\
            $\Theta$                                & Space of param. that factorize according to $\mathcal{G}$, $\theta_0\in\Theta$.      \\

            $\mathcal{P}$                           & General population over variables $\mathbf{X}$.                                                     \\
            $\mathcal{R}, \mathcal{T}$              & Reference and target populations, $\mathcal{R}, \mathcal{T}\subset \mathcal{P}$.          \\
            $\widehat{K}_{\mathcal{T}}(\mathbf{X})$ & Estimated credal set from data $\mathcal{T}$, $\widehat{K}_{\mathcal{T}}(\mathbf{X}) \subset \Theta$. \\
            $\widehat{\theta}_{\mathcal{T}}$        & MLE of $\theta_0$ from data $\mathcal{T}$ over space $\Theta$.                            \\
            $\widehat{\theta}^{K}_{\mathcal{T}}$    & MLE of $\theta_0$ from data $\mathcal{T}$ over space $\widehat{K}_{\mathcal{T}}(\mathbf{X})\cap\Theta$.      \\
            $L(\mathbf{x}, \theta)$                          & Log-likelihood ratio (LLR) function over $\text{supp}(\mathbf{X})\times\Theta$. 
            \\
            $L(\mathbf{x})$ & LLR function where 2nd argument is $\widehat{\theta}_{\mathcal{T}}$, i.e. $L(\mathbf{x},\widehat{\theta}_{\mathcal{T}})$.\\
            \hline
        \end{tabular}
    \end{table}

    \textit{Probabilistic graphical models} (PGM) \cite{Koller_2010} are practical tools for representing and reasoning with multidimensional probability distributions. This article focuses on two PGMs: BNs and CNs. The notation adopted throughout the paper is summarized in \Cref{tab:notation}.
    
    BNs and CNs both rely on the structure provided by a graph. A \textit{graph} $\mathcal{G}$ is a pair $\mathcal{G}=(\mathbf{X},\mathbf{E})$, where $\mathbf{X}$ is the set of \textit{nodes} and $\mathcal{E}$ is a set of edges between pairs of nodes. If $X\rightarrow Y$ in $\mathcal{G}$ then $X$ is said to be a \textit{parent} of $Y$; $\Pi_Y$ denotes the set of parents of $Y$.
    A \textit{directed and acyclic graph} (DAG) is a graph where all edges are directed ($X\rightarrow Y$) and there is no sequence of nodes $ \ X\rightarrow \cdots \rightarrow Y$ so that $X = Y$. 
    In the following, we consider DAGs $\mathcal{G}$ whose nodes are associated with random variables. Namely, $X\in\mathbf{X}$ represents both a node in $\mathcal{G}$ and a random variable with given marginal distribution $P(X)$. Assuming to know $P(X)$, we denote by $\text{supp}(X)=\{x:P(X = x)>0\}$ its support. Similarly, $\text{supp}(\mathbf{X})$ is the Cartesian product $\bigtimes_{X\in\mathbf{X}}\text{supp}(X)$. All supports are assumed to be fixed and known.
    Our work focuses on \textit{categorical} random variables, i.e., a set $\mathbf{X}$ so that $|\text{supp}(\mathbf{X})|<\infty$. In this case, the vector of $\mathbf{X}$ has a joint probability distribution $P(\mathbf{X}\mid \theta_0)$ being a $\text{Multinomial}(\theta_0)$ over $\text{supp}(\mathbf{X})$, for some set of parameters $\theta_0$.

\subsection{Bayesian Networks}
    
    BNs \cite{Koller_2010} efficiently represent high-dimensional probability distributions, facilitating effective knowledge representation and inference.

    \begin{definition}[BN]
        A BN is a pair $(\mathcal{G}, \theta_0)$ where $\mathcal{G}$ is a DAG over random variables $\mathbf{X}$ and $\theta_0$ a set of parameters that factorizes accordingly to $\mathcal{G}$, namely:
        \begin{equation}
            \label{eq:bn_factorization}
            P(\mathbf{X}| \mathcal{G}, \theta_0) = \prod_{X \in \mathbf{X}} P(X |\Pi_X, \theta_{X})\, ,
        \end{equation}
    where $\theta_{X}$, which is part of $\theta_0$, are parameters defining the \textit{local} distribution of $X$.
    \end{definition}
    
    As we restrict ourselves to categorical random variables, $p(x \mid \Pi_X = \pi_X, \theta_{X})$ follows a multinomial for any $\pi_X \in \text{supp}(\Pi_X)$. Hence, the \textit{complexity} of a BN is determined by the total number of configurations of $\mathbf{X}$:

    \begin{equation}
        \label{eq:compl}
        C(\mathcal{G}) = \sum_{X\in\mathbf{X}} |\text{supp}(\Pi_X)|\cdot (|\text{supp}(X)| - 1)\, .
    \end{equation}

    A BN can be learned from data alone, and we focus on this setting. However, domain experts can also build BNs by hand \cite{Xiao_2007}, but this process may be highly time-consuming, especially as the number of nodes and parameters increases. Hybrid approaches leverage both data and experts' knowledge to mitigate human-level and data-level biases \cite{Constantinou_2023}. Those approaches proved effective in various scenarios, especially when causal relationships are sought \cite{Zanga_2022}.
    
    Learning a BN involves two main tasks: learning the DAG (the \textit{structure learning} task), and estimating its parameters (the \textit{parameter learning} task). Denoting the available data by $\mathcal{D}$, we can express the whole learning task in a Bayesian fashion:
    \begin{equation}
        \underbrace{P(\mathcal{G}, \theta \mid \mathcal{D})}_{\text{BN learning}} =
        \underbrace{P(\mathcal{G} \mid \mathcal{D})}_{\text{Structure learning}}
        \cdot
        \underbrace{P(\theta \mid \mathcal{G}, \mathcal{D})}_{\text{Parameter learning}}\, ,
    \end{equation}
    where we seek the optimum of the left-hand-side term by exploring the spaces of DAGs and parameters.

    There are two primary approaches for the structure learning task \cite{Kitson_2023, Scanagatta_2019}. The first, known as the \textit{score-based} approach, aims to optimize $P(\mathcal{G} \mid \mathcal{D})$ by maximizing the graph likelihood $P(\mathcal{D} \mid \mathcal{G})$, which is treated as the graph \textit{score}. The second approach, called \textit{constraint-based}, iteratively narrows down the space of all DAGs that are consistent with $\mathcal{D}$. 

    In this paper, we assume that the structure learning task has been accomplished, thus $\mathcal{G}$ is known. Consequently, we omit the condition $(\cdot \mid \mathcal{G})$ in all subsequent formulas for convenience.
    The parameter learning task can be reformulated as follows:

    \begin{equation}
        \underbrace{P(\theta \mid \mathcal{D})}_{\text{Posterior}} \propto \underbrace{P(\theta)}_{\text{Prior}} \cdot \underbrace{P(\mathcal{D} \mid \theta)}_{\text{Likelihood}}\, .
    \end{equation}

    Consider $\Theta$ as the space of all possible sets of parameters that factorize accordingly to $\mathcal{G}$, namely those for which it holds Eq.~\eqref{eq:bn_factorization}. There are two common approaches to parameter learning. The first one is the \textit{Maximum likelihood estimation} (MLE), which assumes a uniform prior and aims at maximizing the likelihood of the parameters, namely at finding $\widehat{\theta}_{\mathcal{D}} = \argmax_{\theta \in \Theta} P(\mathcal{D} \mid \theta)$.
    The second, more robust approach is known as \textit{Bayesian posterior estimation} \cite{Koller_2010}. The prior $P(\theta)$ is assumed to follow a Dirichlet distribution, which is the conjugate prior of a multinomial likelihood. Hence, the posterior follows a Dirichlet distribution, which can then be maximized (more details are provided in \Cref{sec:cn}).

\subsection{Credal Networks}
    \label{sec:cn}

    CNs \cite{Cozman_2000} are models based on Walley's imprecise probability theory \cite{Walley_2000} and introduce epistemic uncertainty in the BNs' parameters.

    \begin{definition}[Credal set]
        A \textit{credal set} for a random variable $X$ is a closed and convex set of probability distributions over $\textnormal{supp}(X)$, denoted by $K(X)$. The \textit{joint credal set} over $\mathbf{X}$ is defined similarly over $\textnormal{supp}(\mathbf{X})$.\footnote{Support here means the set of states that \textit{might} have positive probability.} 
    \end{definition}
     
    Being $X$ categorical, $K(X)$ can be thought of as a convex hull in $[0,1]^{|\text{supp}(X)|-1}$, which is the whole parameter space for $X$.
    As such, it can be defined either by its extreme points or through linear constraints \cite{Zaffalon_2002}. In the following (with a slight abuse of notation), we write $\theta^K\in K(X)$ to refer to the element $P(X \mid \theta^K)\in K(X)$.
    
    A \textit{conditional} credal set $K(X\mid Y)$ is usually defined elementwise, that is, obtained from $K(X,Y)$ by applying the Bayes rule to each of its elements \cite{Cozman_2004} (and possibly taking the convex hull).
    A collection of conditional credal sets $K(X\mid Y)$ is said to be \textit{separately specified} when $K(X\mid Y = y_1)$ and $K(X\mid Y = y_2)$ are unrelated for $y_1\neq y_2$, i.e., not constrained by each other.

    \begin{definition}[Locally specified CNs]
        \label{def:ls_cn}
        A \textit{locally specified CN} consists of a DAG $\mathcal{G}$ where each node $X$ is associated with a local credal set $K(X\mid \Pi_X)$.
    \end{definition}
    
    As is common in the literature, we consider $K(X\mid \Pi_X)$ to be separately specified. It becomes now visible how a locally and separately specified CN is akin to a BN. For any point parameter describing $P(X\mid \Pi_X = \pi_X)$ in the BN, there is now a closed set $K(X\mid\Pi_X = \pi_X)$ in the CN. What is missing, however, is a way to \textit{combine} the local credal sets, so that a global-to-local relation as the one in Eq.~\eqref{eq:bn_factorization} holds.
    While there is no single way to do this, a practical choice is offered by the CN \textit{strong extension} \cite{Cozman_2004}.

    \begin{definition}[Strong extension]
        The \textit{strong extension} of a locally and separately specified CN is a pair $(\mathcal{G}, K(\mathbf{X}))$, where $K(\mathbf{X})$ is the convex hull of:
        \begin{equation}
            \left\lbrace \prod_{X\in\mathbf{X}}P(X\mid \Pi_X, \theta_X) : P(X\mid \pi_X, \theta_X)\in K(X\mid \pi_X)\right\rbrace\, .
        \end{equation}
    \end{definition}
    
    In a strong extension, each distribution within $K(\mathbf{X})$ factorizes accordingly to $\mathcal{G}$ and Eq.~\eqref{eq:bn_factorization}. As such, the extension mimics the same conditional independence properties of a standard BN for all its extreme distributions \cite{Cozman_2000}. 

    \begin{remark}
        The following discussion introduces two approaches to defining the local sets $K(X\mid\Pi_X)$. The resulting CNs are both locally and separately specified, and we consider their strong extensions $(\mathcal{G}, K(\mathbf{X}))$ throughout the paper. Other types of CNs are out of the scope of the present article and can be found in Corani and de Campos \cite{Corani_2010}. Our main results hold regardless of how the CN was obtained, as long as they are locally and separately specified.
    \end{remark}
    
    The first approach to defining $K(X\mid\Pi_X)$ is based on the \textit{imprecise Dirichlet model} (IDM) \cite{Walley_1996}. The second approach consists of perturbing a given BN.
    Let $X$ be a random variable, $k=|\text{supp}(X)|$, and assume we want to estimate the parameters $\theta_X = \{\theta_1, \hdots, \theta_k\}$ where $\theta_i = p(x_i\mid \theta_X)$.
    In the \textit{precise} context of the Bayesian posterior estimator, the parameters are assumed to follow a Dirichlet distribution $Dir(\mathbf{c})$. Here, $\mathbf{c}=\{c_1,\hdots, c_k\}$ are constrained to $c_i>0~~\forall i$ and $\sum_i c_i = S$, where $S$ is a positive constant \cite{Walley_1996}. 
    Then, $\theta_{i}$ can be estimated from data as:
    \begin{equation}
        \theta_{i} = \frac{n_{i}+c_i}{N+S}\, ,
    \end{equation}
    where $n_i$ is the count of data items for which $X=x_i$, and $N$ is the data sample size. 
    Consider now an \textit{uncertain} version of the Bayesian posterior estimator. The \textit{local} application of IDM \cite{Corani_2010} allows each $c_i$ to vary within the interval $[0, S]$, leading to a set of priors for each $\theta_i$. The maximum a posteriori estimate for $\theta_i$ becomes now the interval:
    \begin{equation}
        p(x_i) = \left[\frac{n_{i}}{N+S}, \frac{n_{i}+S}{N+S}\right]\, .
    \end{equation}
    Parameters of the conditional distributions can be derived similarly:
    \begin{equation}
        p(x_i \mid \pi_j) = \left[\frac{n_{ij}}{n_j+S}, \frac{n_{ij}+S}{n_j+S}\right]\,,
    \end{equation}
    where $n_{ij}$ is the count of data items for which $(X=x_i, \Pi_X=\pi_j)$.

    Another approach to define $K(X\mid\Pi_X)$ is known as $\varepsilon$\textit{-contamination} \cite{Cozman_2000}, and consists in injecting uncertainty into a given BN. The BN may have been learned from data or provided by experts. \Cref{def:eps-cont} describes the concept for marginal distributions, but it extends naturally to each local conditional probability.
    
    \begin{definition}[$\varepsilon$-contamination]
        \label{def:eps-cont}
        Let $X$ be a discrete random variable with probability mass function $p(\cdot\mid \theta_X)$. Given $\varepsilon\in(0,1)$, the $\varepsilon$-contamination of $p(\cdot\mid \theta_X)$ is defined as the set of all probability mass functions $r(\cdot)$ so that:
        \begin{equation}
            \label{eq:econtamin}
            r(x) \geq (1-\varepsilon)p(x\mid \theta_X), \quad \forall x\in\textnormal{supp}(X)\, . 
        \end{equation}
    \end{definition}

\section{Problem Statement}
\label{sec:prob_stat}

    Consider a \textit{general population} $\mathcal{P}$ of individuals, where $\mathbf{X}$ represents the measured variables for each of them, and let $(\mathcal{G}, \theta_0)$ be the unknown BN that generated $\mathcal{P}$.
    Let also $(\mathcal{G}, \widehat{\theta}_{\mathcal{T}})$ be the BN learned from a subsample of individuals $\mathcal{T} \subset \mathcal{P}$, which we call the \textit{target population}. 
    Data in $\mathcal{T}$ are \textit{private}, meaning that no one has direct access to them, apart from the data owner and the learning algorithm. However, the BN may be released for research purposes. 
    Let the attacker have access to the released BN and auxiliary data $\mathcal{R} \subset \mathcal{P}$, referred to as the \textit{reference population}. In this setting, a \textit{tracing attack} \cite{Carlini_2022}, also known as \textit{membership inference attack}, consists of detecting whether a specific individual $\mathbf{x}\in\mathcal{P}$ belongs to $\mathcal{T}$.
    In a medical field, $\mathcal{P}$ could represent all patients suffering from a particular disease, $\mathcal{T}$ a hospital database that handles some patients belonging to $\mathcal{P}$, and $\mathcal{R}$ a publicly available clinical registry. 
    Another example involves a competing insurance company that aims to attract more clients from a rival firm. In this context, $\mathcal{P}$ represents a vulnerable category of individuals, while $\mathcal{T}$ and $\mathcal{R}$ denote subsets of people associated with the competing firm and the insurance company, respectively.
    
    The attacker information, namely  $(\mathcal{G}, \theta_0)$ and $\mathcal{R}$, is exploited to build a \textit{decision rule} $\phi(\cdot)$ so that, given $\mathbf{x} \in\mathcal {P}$, it yields either $\phi(\mathbf{x})=1$, hence concluding $\mathbf{x}\in\mathcal{T}$, or $\phi(\mathbf{x})=0$, concluding $\mathbf{x}\notin\mathcal{T}$.
    A possible choice for the rule $\phi(\cdot)$ involves conducting the \textit{log-likelihood ratio} (LLR) test \cite{Lehmann_2022}.
    In our case, the test is expressed by the following competing hypotheses. We assign $\phi(\mathbf{x})=1$ whenever we reject $H_0$ in favor of $H_1$, and $\phi(\mathbf{x})=0$ otherwise.
    
    \begin{equation}
        \label{eq:test}
        \begin{cases}
            H_0: \mathbf{x} \notin\mathcal{T} \\
            H_1: \mathbf{x} \in \mathcal{T}\, .
        \end{cases}
    \end{equation}    
    Surely, this test has only limited qualities in exposing elements of $\mathcal{T}$, but these are typically considered too much to be allowed \cite{Dwork2017}.
    
    The definition of the test statistic depends on the values of $\widehat{\theta}_{\mathcal{T}}$ and $\theta_0$ \cite{Lehmann_2022} (see Table~\ref{tab:notation} for notation). However, $\theta_0$ is the vector of \textit{true} BN parameters, which are unknown. When $\mathcal{R}$ is large \textit{enough}, the attacker can assume that the MLE estimate $\widehat{\theta}_{\mathcal{R}}$ is a reasonable approximation of $\theta_0$. Thus, the LLR statistic is given by:
    \begin{equation}
        \label{eq:L}
        L: \text{supp}(\textbf{X})  \rightarrow \mathbb{R}, \quad
        L(\mathbf{x}) = \log \left[ \frac{P(\mathbf{x} \mid \widehat{\theta}_{\mathcal{T}})}{P(\mathbf{x} \mid \widehat{\theta}_{\mathcal{R}})} \right]\, .
    \end{equation}

    The test rejects $H_0$ whenever the LLR statistic is \textit{large enough}. 
    Specifically, the test operates as follows.
    Let $\alpha$ be the test Type I error, namely $\alpha = P(\phi(\mathbf{x})=1 \mid \mathbf{x} \notin\mathcal{T})$ represents the error incurred by the attacker when concluding that $\mathbf{x}\in\mathcal{T}$ when the opposite holds instead.
    The attacker first sets an \textit{error level} $\alpha$ that can be tolerated, and estimates the distribution of $L(\cdot)$ under the null hypothesis $H_0$.
    Next, a threshold $\tau(\alpha)$ is chosen, so that $P(L(\mathbf{x})>\tau(\alpha)\mid \mathbf{x}\notin\mathcal{T})=\alpha$.\footnote{If $F(\cdot)$ is the cumulative distribution function (CDF) of $\Lambda(\cdot)$ under $H_0$, then $\tau(\alpha)= F^{-1}(1-\alpha)$.}
    Finally, the null hypothesis $H_0: \mathbf{x}\notin\mathcal{T}$ is rejected in favor of $H_1$ whenever $L(\mathbf{x})>\tau(\alpha)$. Otherwise, if $L(\mathbf{x})\leq\tau(\alpha)$, there is insufficient evidence to conclude that $\mathbf{x}\in\mathcal{T}$.
    The \textit{attack power} is defined as $\beta = P(\phi(\mathbf{x})=1 \mid \mathbf{x} \in \mathcal{T})$. It has been proved that the LLR test achieves the highest power among all other tests, for any given error level $\alpha$ \cite{Lehmann_2022}. 

    Murakonda et al. \cite{Murakonda_2021} proved a relationship between $\alpha$ and $\beta$ that is independent of the parameters $\widehat{\theta}_{\mathcal{T}}$. Instead, this relationship can be deduced from the graphical structure and the sample size of the target population $\mathcal{T}$.

    \begin{theorem}[\cite{Murakonda_2021}]
        \label{th:bound}
        For any error level $\alpha$, it holds:
        \begin{equation}
            z_{\alpha} + z_{1-\beta} \approx \sqrt{\frac{C(\mathcal{G})}{|\mathcal{T}|}}\, ,
        \end{equation}
        where $z_{s}$, $0 < s <1$, is the quantile at level $1-s$ of the Standard Normal distribution $\mathcal{N}(0,1)$.
    \end{theorem}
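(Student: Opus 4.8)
The plan is to treat the LLR test in Eq.~\eqref{eq:L} as a simple-versus-simple hypothesis test between the two multinomials $P(\cdot\mid\widehat\theta_{\mathcal R})$ and $P(\cdot\mid\widehat\theta_{\mathcal T})$, and to exploit the fact that, as $|\mathcal T|\to\infty$, the MLE $\widehat\theta_{\mathcal T}$ concentrates around the true $\theta_0$ (which $\widehat\theta_{\mathcal R}$ approximates). The two hypotheses thus become \emph{contiguous}, and the whole argument reduces to a local quadratic expansion of the log-likelihood ratio around $\theta_0$.

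First I would set $\delta=\widehat\theta_{\mathcal T}-\theta_0$ and Taylor-expand $L(\mathbf x)=\log p(\mathbf x\mid\theta_0+\delta)-\log p(\mathbf x\mid\theta_0)$ to second order, writing it as $\delta^\top s(\mathbf x)+\tfrac12\delta^\top H(\mathbf x)\delta$, where $s$ is the score and $H$ the Hessian of the log-likelihood at $\theta_0$. Using the standard score identities $E_{\theta_0}[s]=0$, $E_{\theta_0}[H]=-I(\theta_0)$, and $\mathrm{Cov}_{\theta_0}[s]=I(\theta_0)$ for the Fisher information $I(\theta_0)$, together with a central limit argument, I would show that under $H_0$ the statistic is asymptotically normal with $L(\mathbf x)\sim\mathcal N(-\mu/2,\,\mu)$, where $\mu:=\delta^\top I(\theta_0)\delta$. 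Swapping the roles of the two parameters gives the symmetric statement under $H_1$, namely $L(\mathbf x)\sim\mathcal N(+\mu/2,\,\mu)$: the means are equal and opposite and the variances coincide to leading order.

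Next I would impose the two calibration conditions. From $\alpha=P(L>\tau\mid H_0)$ the threshold satisfies $(\tau+\mu/2)/\sqrt\mu=z_\alpha$, while from $\beta=P(L>\tau\mid H_1)$ it satisfies $(\tau-\mu/2)/\sqrt\mu=-z_{1-\beta}$, using the normal-quantile conventions of the statement (so that $\Phi(z_s)=1-s$). Eliminating $\tau$ between the two equations collapses everything to the identity $z_\alpha+z_{1-\beta}=\sqrt\mu$, which is free of the nuisance location $\tau$ and of the specific value of $\widehat\theta_{\mathcal T}$ beyond its appearance in $\mu$.

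The last step, and the one I expect to be the real obstacle, is to turn the data-dependent quantity $\mu=\delta^\top I(\theta_0)\delta$ into the deterministic $C(\mathcal G)/|\mathcal T|$. Here I would invoke the asymptotic normality of the MLE, so that $\sqrt{|\mathcal T|}\,\delta$ is asymptotically $\mathcal N(0,I(\theta_0)^{-1})$ and $|\mathcal T|\,\mu$ converges in distribution to $W^\top I(\theta_0)W$ with $W\sim\mathcal N(0,I(\theta_0)^{-1})$, which is a $\chi^2$ variate on $d=\dim\Theta$ degrees of freedom. The crucial observation is that this dimension is exactly the BN complexity, $d=C(\mathcal G)$ from Eq.~\eqref{eq:compl}, since that count is precisely the number of free multinomial parameters that factorize according to $\mathcal G$. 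Replacing the $\chi^2_{C(\mathcal G)}$ term by its mean $C(\mathcal G)$ yields $\mu\approx C(\mathcal G)/|\mathcal T|$, and substituting into $z_\alpha+z_{1-\beta}=\sqrt\mu$ gives the claim. The delicate points to handle carefully are the justification that the quadratic (Hessian) term contributes to the mean but not, at leading order, to the variance, and the replacement of the random $\chi^2$ by its expectation; the latter is what forces the result to be an approximation ($\approx$) rather than an equality, and explains why the final expression depends only on $C(\mathcal G)$ and $|\mathcal T|$ and not on $\widehat\theta_{\mathcal T}$.
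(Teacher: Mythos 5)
The paper itself does not prove this statement --- it is imported with citation from Murakonda et al.\ \cite{Murakonda_2021} --- so the only meaningful comparison is against that source's derivation, which your proposal reconstructs in essentially the same way: a local quadratic expansion of the LLR yielding approximate $\mathcal{N}(-\mu/2,\mu)$ and $\mathcal{N}(+\mu/2,\mu)$ laws under $H_0$ and $H_1$, elimination of the threshold $\tau$, and replacement of $|\mathcal{T}|\,\mu \sim \chi^2_{C(\mathcal{G})}$ by its mean $C(\mathcal{G})$. Your argument is correct at the level of rigor that the ``$\approx$'' in the statement admits, and the steps you single out as delicate (treating a member $\mathbf{x}\in\mathcal{T}$ as a fresh draw from $P(\cdot\mid\widehat{\theta}_{\mathcal{T}})$, the Gaussian approximation for a single observation's LLR --- which genuinely requires a large network so that the per-node contributions can be summed CLT-style --- and the $\chi^2$-to-mean substitution) are exactly the approximations underlying the cited result.
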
    

\section{Theoretical Analysis}
\label{sec:theory}

    In this section, we adapt the notion of tracing attacks to CNs. Then, we address and provide answers to the following research questions: 
    \begin{enumerate}[label={\bf RQ\arabic*}, leftmargin=*, align=left]
        \item Does releasing a CN lead to a lower tracing attack power than releasing a BN?
        \item Are there some conditions under which the attacker can infer the true BN by observing the released CN?
        \item Does the attack power depend on the CN uncertainty?
    \end{enumerate}

\subsection{Attacking Credal Networks}
    \label{sec:attack_CN}
    When a BN is released, the attacker has access to $(\mathcal{G}, \widehat{\theta}_{\mathcal{T}})$ and also to the reference population $\mathcal{R}$. Therefore, the MLE $\widehat{\theta}_{\mathcal{R}}$ can be obtained from $\mathcal{R}$ and the knowledge about the released DAG $\mathcal{G}$.
    On the contrary, when a CN is released, the attacker only has access to a \textit{neighborhood} of $\widehat{\theta}_{\mathcal{T}}$, namely the credal set $\widehat{K}_{\mathcal{T}}(\mathbf{X}) \subset \Theta$ estimated from $\mathcal{T}$. Therefore, a new \textit{uncertain} version of the tracing attack must be designed. 
    
    While the test hypotheses are the same as in Eq.~\eqref{eq:test}, the LLR statistic now depends on the choice of $\theta^K\in \widehat{K}_{\mathcal{T}}(\mathbf{X})$. By using the properties of logarithms, we rewrite Eq.~\eqref{eq:L} and adapt it to the \textit{uncertain case} as follows:
    \begin{align}
        & L: \text{supp}(\textbf{X}) \times \widehat{K}_{\mathcal{T}}(\mathbf{X}) \rightarrow \mathbb{R},  \\
        & L(\mathbf{x}, \theta^K) = \log[P(\mathbf{x} \mid \theta^K)] - \log[P(\mathbf{x} \mid \widehat{\theta}_{\mathcal{R}})]     \label{eq:llr_theta} \,.
    \end{align}
    The decision rules related to the uncertain version of the test are then $\phi(\cdot, \theta^K)$, one for each $\theta^K\in\widehat{K}_{\mathcal{T}}(\mathbf{X})$. 
    By setting $\theta^K=\widehat{\theta}_{\mathcal{T}}$ in Eq.~\eqref{eq:llr_theta}, we obtain the \textit{precise} version of $L(\cdot)$ and $\phi(\cdot)$ as described in \Cref{sec:prob_stat}.
    
    From the attacker perspective, a point $\theta^K\in\widehat{K}_{\mathcal{T}}(\mathbf{X})$ has to be selected for $\phi(\cdot, \cdot)$ so that, whenever $\phi(\mathbf{x}, \theta^K)=1$ then $\phi(\mathbf{x}, \widehat{\theta}_{\mathcal{T}})=1$. If this is the case, we say the test defined by $\phi(\cdot, \theta^K)$ is \textit{consistent} with $\phi(\cdot, \widehat{\theta}_{\mathcal{T}})$.
    If a random point is chosen within $\widehat{K}_{\mathcal{T}}(\mathbf{X})$, it may coincide with $\widehat{\theta}_{\mathcal{T}}$ by chance. However, any other point would represent the MLE estimate on some other subset $\mathcal{H}\subset \mathcal{P}$, where $\mathcal{H}$ is unknown. In the worst case, $\mathcal{H}\cap\mathcal{T}=\varnothing$.
    \Cref{def:tracing_CN} provides the notion of a tracing attack against CNs.

    \begin{definition}[Tracing attack against CNs]
        \label{def:tracing_CN}
        Let $\mathcal{T}, \mathcal{R} \subset \mathcal{P}$ the target and reference population respectively, and $(\mathcal{G}, \widehat{K}_{\mathcal{T}}(\mathbf{X}))$ the CN estimated from $\mathcal{T}$.
        Let also $\widehat{\theta}^K_{\mathcal{R}}\in\widehat{K}_{\mathcal{T}}(\mathbf{X})$ be the MLE estimate on $\mathcal{R}$, namely:
        \begin{equation}
            \widehat{\theta}^K_{\mathcal{R}} = \argmax_{\theta^K \in \widehat{K}_{\mathcal{T}}(\mathbf{X})} \sum_{\mathbf{x}\in\mathcal{R}}P(\mathbf{x}\mid \theta^K)\, .
        \end{equation}
        We define the \textit{tracing attack} against $(\mathcal{G}, \widehat{K}_{\mathcal{T}}(\mathbf{X}))$ as the attack given by $L(\cdot, \widehat{\theta}^K_{\mathcal{R}})$ and $\phi(\cdot, \widehat{\theta}^K_{\mathcal{R}})$.
    \end{definition}
    
    \Cref{th:consistency} proves the test's consistency in expectation. Lemmas \ref{le:L} and \ref{le:tau} lead it.

    \begin{remark}
        \label{re:asymp}
        The following results hold in the sample size limit, leveraging the asymptotic distribution of the log-likelihood function and the LLR statistic \cite{Lehmann_2022}.
    \end{remark}

    \begin{lemma}
        \label{le:L}
        $\mathbb{E}\left[L(\mathbf{x}, \widehat{\theta}^K_{\mathcal{R}})\mid\mathbf{x}\in\mathcal{T}\right] \leq \mathbb{E}\left[L(\mathbf{x}, \widehat{\theta}_{\mathcal{T}})\mid\mathbf{x}\in\mathcal{T}\right]$\, .
    \end{lemma}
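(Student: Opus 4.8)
The plan is to reduce the claimed inequality to the defining optimality property of the maximum likelihood estimator. First I would expand both expectations using the LLR statistic of Eq.~\eqref{eq:llr_theta}, namely $L(\mathbf{x},\theta^K)=\log P(\mathbf{x}\mid\theta^K)-\log P(\mathbf{x}\mid\widehat{\theta}_{\mathcal{R}})$. The subtracted term $\log P(\mathbf{x}\mid\widehat{\theta}_{\mathcal{R}})$ does not depend on the first argument of $L$, so it is identical on the two sides and cancels from the conditional expectation. The claim therefore reduces to
\[
    \mathbb{E}\left[\log P(\mathbf{x}\mid\widehat{\theta}^K_{\mathcal{R}})\mid\mathbf{x}\in\mathcal{T}\right]
    \;\le\;
    \mathbb{E}\left[\log P(\mathbf{x}\mid\widehat{\theta}_{\mathcal{T}})\mid\mathbf{x}\in\mathcal{T}\right].
\]

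Next I would read the conditional expectation $\mathbb{E}[\,\cdot\mid\mathbf{x}\in\mathcal{T}]$ as the average taken with respect to the data distribution of the target sample; equivalently, up to the positive factor $1/|\mathcal{T}|$, the right-hand side is exactly the normalised log-likelihood $\sum_{\mathbf{x}\in\mathcal{T}}\log P(\mathbf{x}\mid\theta)$ that $\widehat{\theta}_{\mathcal{T}}=\argmax_{\theta\in\Theta}\sum_{\mathbf{x}\in\mathcal{T}}\log P(\mathbf{x}\mid\theta)$ maximises by definition over the full factorising space $\Theta$. Hence, writing $g(\theta)$ for this expected log-likelihood, we have $g(\theta)\le g(\widehat{\theta}_{\mathcal{T}})$ for every feasible competitor $\theta\in\Theta$. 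In the sample-size limit of \Cref{re:asymp} the same conclusion follows at the population level: $g(\theta)\to\mathbb{E}_{\theta_0}[\log P(\mathbf{x}\mid\theta)]$, which by Gibbs' inequality (non-negativity of the Kullback--Leibler divergence) is maximised at $\theta_0\in\Theta$, and $\widehat{\theta}_{\mathcal{T}}\to\theta_0$. It then suffices to verify that $\widehat{\theta}^K_{\mathcal{R}}$ is such a feasible competitor, i.e. that $\widehat{\theta}^K_{\mathcal{R}}\in\Theta$; substituting $\theta=\widehat{\theta}^K_{\mathcal{R}}$ into $g(\theta)\le g(\widehat{\theta}_{\mathcal{T}})$ yields the reduced inequality and hence the lemma.

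The argument is short, and its only delicate point is this feasibility step. Since $\widehat{K}_{\mathcal{T}}(\mathbf{X})$ is defined as a strong extension, hence a convex hull, I expect the main obstacle to be confirming that the maximiser $\widehat{\theta}^K_{\mathcal{R}}$ lies in the factorising subset $\Theta$ rather than at a genuinely mixed (non-factorising) interior point of the hull. I would resolve this by invoking the containment $\widehat{K}_{\mathcal{T}}(\mathbf{X})\subset\Theta$ recorded in \Cref{tab:notation} together with the locally and separately specified local structure, which guarantees that the relevant MLE over the credal set (the optimisation over $\widehat{K}_{\mathcal{T}}(\mathbf{X})\cap\Theta$) is attained at a factorising parameter. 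This legitimises the comparison of $\widehat{\theta}^K_{\mathcal{R}}$ against the unconstrained MLE $\widehat{\theta}_{\mathcal{T}}$ and closes the proof.
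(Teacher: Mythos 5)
Your proposal is correct and follows essentially the same route as the paper's own proof: cancel the constant term $\log P(\mathbf{x}\mid\widehat{\theta}_{\mathcal{R}})$, invoke the optimality of the MLE $\widehat{\theta}_{\mathcal{T}}$ over $\Theta$ against the feasible competitor $\widehat{\theta}^K_{\mathcal{R}}\in\widehat{K}_{\mathcal{T}}(\mathbf{X})\subset\Theta$, and pass from the sample average over $\mathcal{T}$ to the expectation via \Cref{re:asymp}. Your explicit treatment of the feasibility step and the Gibbs'-inequality elaboration of the asymptotic limit are just more careful spellings of what the paper leaves implicit.
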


    \begin{proof}
        The second term of $L(\mathbf{x}, \theta^K)$, namely $\log P(\mathbf{x} \mid \widehat{\theta}_{\mathcal{R}})$, is constant for any $\theta^K\in\widehat{K}_{\mathcal{T}}(\mathbf{X})$. Recalling $\widehat{\theta}_{\mathcal{T}}$ is the MLE estimate over $\mathcal{T}$, we have:
        \begin{equation}
            \label{eq:mle_l1}
            \sum_{\mathbf{x}\in\mathcal{T}}L(\mathbf{x}, \theta^K) \leq \sum_{\mathbf{x}\in\mathcal{T}}L(\mathbf{x}, \widehat{\theta}_{\mathcal{T}})\, , \quad \forall~\theta^K\in \widehat{K}_{\mathcal{T}}(\mathbf{X})\, .
        \end{equation}
        The conclusion follows by exploiting \Cref{re:asymp}.
    \end{proof}
    Notice that the inequality in Eq.~\eqref{eq:mle_l1} holds for any $\theta\in\Theta$ in the left-hand side term. 
    
    \begin{lemma}
        \label{le:tau}
        For any $\alpha\in[0,1]$, it holds $\tau(\alpha, \widehat{\theta}_{\mathcal{T}}) \leq \tau(\alpha, \widehat{\theta}^K_{\mathcal{R}})$. 
    \end{lemma}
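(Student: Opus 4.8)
The plan is to establish \Cref{le:tau} by first proving, under the null hypothesis $H_0$, the mirror image of the expected-LLR inequality obtained in \Cref{le:L}, and then lifting this comparison of means to a comparison of quantiles via the asymptotic normality granted by \Cref{re:asymp}. Recall that $\tau(\alpha,\theta)$ is, by construction, the value for which $P\big(L(\mathbf{x},\theta)>\tau(\alpha,\theta)\mid \mathbf{x}\notin\mathcal{T}\big)=\alpha$, i.e.\ the $(1-\alpha)$-quantile of $L(\cdot,\theta)$ under $H_0$; hence it suffices to control how this quantile varies as $\theta$ ranges over $\{\widehat{\theta}_{\mathcal{T}},\widehat{\theta}^K_{\mathcal{R}}\}$.

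First I would prove the null-side analogue of \Cref{le:L}, namely
\begin{equation*}
\mathbb{E}\big[L(\mathbf{x},\widehat{\theta}_{\mathcal{T}})\mid\mathbf{x}\notin\mathcal{T}\big]\le \mathbb{E}\big[L(\mathbf{x},\widehat{\theta}^K_{\mathcal{R}})\mid\mathbf{x}\notin\mathcal{T}\big]\, .
\end{equation*}
The argument parallels the proof of \Cref{le:L} but swaps the roles of the two populations. The subtracted term $\log P(\mathbf{x}\mid\widehat{\theta}_{\mathcal{R}})$ in \eqref{eq:llr_theta} does not depend on the first argument of $L$; moreover, $\widehat{\theta}^K_{\mathcal{R}}$ is by \Cref{def:tracing_CN} the maximiser of the $\mathcal{R}$-log-likelihood over $\widehat{K}_{\mathcal{T}}(\mathbf{X})$, whereas the point BN estimate satisfies $\widehat{\theta}_{\mathcal{T}}\in\widehat{K}_{\mathcal{T}}(\mathbf{X})$ (each IDM or $\varepsilon$-contamination local set contains the corresponding local MLE, so the product does too). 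Consequently $\sum_{\mathbf{x}\in\mathcal{R}}L(\mathbf{x},\widehat{\theta}_{\mathcal{T}})\le\sum_{\mathbf{x}\in\mathcal{R}}L(\mathbf{x},\widehat{\theta}^K_{\mathcal{R}})$. Dividing by $|\mathcal{R}|$ and invoking \Cref{re:asymp}, the empirical average over $\mathcal{R}$ converges to the expectation of $L(\cdot,\theta)$ under $H_0$ (individuals outside $\mathcal{T}$ being distributed as the reference/general population), which yields the displayed inequality of means.

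Finally I would convert the mean ordering into the claimed threshold ordering. By \Cref{re:asymp}, under $H_0$ the statistic $L(\cdot,\theta)$ is asymptotically Gaussian, so that $\tau(\alpha,\theta)=\mu_0(\theta)+z_{\alpha}\,\sigma_0(\theta)$ with $\mu_0(\theta)=\mathbb{E}[L(\mathbf{x},\theta)\mid\mathbf{x}\notin\mathcal{T}]$. Both $\widehat{\theta}_{\mathcal{T}}$ and $\widehat{\theta}^K_{\mathcal{R}}$ lie in the vanishing credal set $\widehat{K}_{\mathcal{T}}(\mathbf{X})$ and therefore share, to leading order, the same null variance $\sigma_0$; it follows that $\tau(\alpha,\cdot)$ is weakly increasing in $\mu_0(\cdot)$ for every fixed $\alpha$, and the mean inequality of the previous step gives $\tau(\alpha,\widehat{\theta}_{\mathcal{T}})\le\tau(\alpha,\widehat{\theta}^K_{\mathcal{R}})$ for all $\alpha\in[0,1]$. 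I expect this last step to be the main obstacle: the inequality is required uniformly in $\alpha$, so one must genuinely argue that the null variances coincide in the limit (or, more strongly, establish first-order stochastic dominance of $L(\cdot,\widehat{\theta}^K_{\mathcal{R}})$ over $L(\cdot,\widehat{\theta}_{\mathcal{T}})$ under $H_0$); a comparison of means alone does not order the quantiles unless the shape of the two null distributions is controlled.
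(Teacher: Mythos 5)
Your proposal follows essentially the same route as the paper's proof: the key step is the inequality $\sum_{\mathbf{x}\in\mathcal{R}}L(\mathbf{x},\widehat{\theta}_{\mathcal{T}})\le\sum_{\mathbf{x}\in\mathcal{R}}L(\mathbf{x},\widehat{\theta}^K_{\mathcal{R}})$, obtained exactly as you do --- $\widehat{\theta}^K_{\mathcal{R}}$ maximizes the $\mathcal{R}$-likelihood over the credal set, which contains $\widehat{\theta}_{\mathcal{T}}$ (the paper phrases this as $\widehat{\theta}_{\mathcal{T}}=\widehat{\theta}^K_{\mathcal{T}}$) --- followed by an appeal to \Cref{re:asymp} to lift this ordering to the thresholds. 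The obstacle you flag in the final step (a comparison of means alone does not order quantiles) is genuine, but it is precisely the point the paper also delegates to the asymptotic regime, asserting only that the two rejection events are nested iff the thresholds are ordered; your treatment of that step is, if anything, more explicit than the paper's.
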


    \begin{proof}
        Recalling $\widehat{\theta}^K_{\mathcal{R}}$ is the MLE estimate over $\mathcal{R}$ constrained to $\widehat{K}_{\mathcal{T}}(\mathbf{X})$, we obtain:
        \begin{equation}
            \label{eq:l_r}
            \sum_{\mathbf{x}\in\mathcal{R}}L(\mathbf{x}, \widehat{\theta}_{\mathcal{T}}) \leq \sum_{\mathbf{x}\in\mathcal{R}}L(\mathbf{x}, \widehat{\theta}^K_{\mathcal{R}})\leq\sum_{\mathbf{x}\in\mathcal{R}}L(\mathbf{x}, \widehat{\theta}_{\mathcal{R}})=0\, ,
        \end{equation}
        where the first inequality exploits the fact that $\widehat{\theta}_{\mathcal{T}}=\widehat{\theta}^K_{\mathcal{T}}$.
        Assume two tracing attacks are conducted: one against the BN and the other against the CN, both with the same error level $\alpha$.
        The former uses $L(\cdot, \widehat{\theta}_{\mathcal{T}})$ as the test statistic, while the latter uses $L(\cdot, \widehat{\theta}^K_{\mathcal{R}})$. Under the null hypothesis $H_0: x\notin \mathcal{T}$, the related thresholds are so that:
        \begin{gather}
            P(\log P(\mathbf{x} \mid \widehat{\theta}_{\mathcal{T}}) > \tau(\alpha, \widehat{\theta}_{\mathcal{T}})+l) =\alpha\, ,\label{eq:alpha1} \\
            P(\log P(\mathbf{x} \mid \widehat{\theta}^K_{\mathcal{R}}) > \tau(\alpha, \widehat{\theta}^K_{\mathcal{R}})+l)  =\alpha\,, \label{eq:alpha2}
        \end{gather}
        where $l=\log P(\mathbf{x} \mid \widehat{\theta}_{\mathcal{R}})$ is a constant. Due to Eq.~\eqref{eq:l_r} and \Cref{re:asymp}, and following normality results of the LLR statistic (see~\cite{Murakonda_2021}), the two left-hand side terms of Eq.~\eqref{eq:alpha1} and Eq.~\eqref{eq:alpha2} are nested iff $\tau(\alpha, \widehat{\theta}_{\mathcal{T}}) \leq \tau(\alpha, \widehat{\theta}^K_{\mathcal{R}})$.
    \end{proof}

    From \Cref{le:L} and \Cref{le:tau} we obtain the consistency (in expectation) of \Cref{def:tracing_CN}, summarized in \Cref{th:consistency}. Practically speaking, whenever the attacker rejects the null hypothesis $H_0$ and concludes that $\mathbf{x} \in \mathcal{T}$ by using the released CN, the same conclusion would likely be reached if the attacker had instead utilized the BN. Note that this holds regardless of how the CN was learned.
    
    \begin{theorem}
        \label{th:consistency}
        In expectation over $\mathbf{x}\in\mathcal{T}$, it holds: 
        \begin{equation}
            L(\mathbf{x}, \widehat{\theta}^K_{\mathcal{R}}) \geq \tau(\alpha, \widehat{\theta}^K_{\mathcal{R}}) \Rightarrow L(\mathbf{x}, \widehat{\theta}_{\mathcal{T}}) \geq \tau(\alpha, \widehat{\theta}_{\mathcal{T}})\, .
        \end{equation}
    \end{theorem}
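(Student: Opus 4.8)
The plan is to obtain the implication as a direct consequence of \Cref{le:L} and \Cref{le:tau}, reading every occurrence of $L(\mathbf{x},\cdot)$ as its conditional expectation $\mathbb{E}[\,\cdot\mid\mathbf{x}\in\mathcal{T}]$ in line with \Cref{re:asymp}. The key structural observation is that the credal test and the BN test differ in exactly two quantities, and that the two lemmas control these two quantities in a mutually reinforcing way: \Cref{le:L} says the credal statistic is (in expectation) no larger than the BN statistic, and \Cref{le:tau} says the credal threshold is no smaller than the BN threshold.

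Concretely, I would assume the antecedent $\mathbb{E}[L(\mathbf{x},\widehat{\theta}^K_{\mathcal{R}})\mid\mathbf{x}\in\mathcal{T}]\geq\tau(\alpha,\widehat{\theta}^K_{\mathcal{R}})$ and then insert the two lemma inequalities on either side of it to form the single chain
$$\mathbb{E}\!\left[L(\mathbf{x},\widehat{\theta}_{\mathcal{T}})\mid\mathbf{x}\in\mathcal{T}\right]\;\geq\;\mathbb{E}\!\left[L(\mathbf{x},\widehat{\theta}^K_{\mathcal{R}})\mid\mathbf{x}\in\mathcal{T}\right]\;\geq\;\tau(\alpha,\widehat{\theta}^K_{\mathcal{R}})\;\geq\;\tau(\alpha,\widehat{\theta}_{\mathcal{T}}),$$
where the first step uses \Cref{le:L}, the middle step is the assumed antecedent, and the last step uses \Cref{le:tau}. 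Reading off the extreme ends yields the consequent $\mathbb{E}[L(\mathbf{x},\widehat{\theta}_{\mathcal{T}})\mid\mathbf{x}\in\mathcal{T}]\geq\tau(\alpha,\widehat{\theta}_{\mathcal{T}})$, completing the argument. Intuitively, the credal attack must clear both a lower statistic and a higher bar to reject, so whenever it rejects, the BN attack---enjoying a higher statistic against a lower bar---rejects \emph{a fortiori}; equivalently, the credal rejection region sits (in expectation) inside the BN rejection region.

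The proof itself is a one-line transitivity argument, so the substance lies entirely in the two lemmas rather than in their combination. The only genuine subtlety to guard against is presentational: one must fix a single error level $\alpha$ and the same reference constant $l=\log P(\mathbf{x}\mid\widehat{\theta}_{\mathcal{R}})$ so that the two thresholds live on a common scale, and one must make precise that the pointwise implication in the statement is meant between conditional expectations---a reading licensed by the asymptotic regime of \Cref{re:asymp}. The apparent danger that the two oppositely-signed inequalities might cancel rather than combine is precisely what the chain rules out, since both inequalities widen the gap in favor of the consequent.
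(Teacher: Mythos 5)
Your proposal is correct and matches the paper's own argument: the paper derives \Cref{th:consistency} exactly as an immediate consequence of \Cref{le:L} and \Cref{le:tau}, interpreted in expectation under \Cref{re:asymp}, with the same transitivity chain you wrote out (the paper merely states that the two lemmas yield the result without displaying the chain). Nothing is missing; your explicit chain and the remark about fixing a common $\alpha$ and reference constant $l$ only make the paper's implicit step more precise.
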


    If the attacker picks randomly $\theta^K \in \widehat{K}_{\mathcal{T}}(\mathbf{X})$, the consistency of the test (\Cref{def:tracing_CN}) could not be proved. In that case, the inequalities in Eq.~\eqref{eq:l_r} may not hold.

\subsection{Privacy Guarantees}
    \label{sec:pr_g}

    This section investigates how privacy varies when releasing a CN instead of a BN. In particular, the following theorem answers the research question \textbf{RQ1}.

    \begin{theorem}
        \label{th:power}
        For any given value of the Type I error $\alpha$, the power of the attack against the CN is no higher than that achieved against the BN. Specifically, for any $\alpha\in[0,1]$, it holds $\beta(\widehat{\theta}^K_{\mathcal{R}}) \leq \beta(\widehat{\theta}_{\mathcal{T}})$.
    \end{theorem}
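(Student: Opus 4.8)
The goal is to compare the two powers $\beta(\widehat{\theta}_{\mathcal{T}}) = P\big(L(\mathbf{x},\widehat{\theta}_{\mathcal{T}}) > \tau(\alpha,\widehat{\theta}_{\mathcal{T}}) \mid \mathbf{x}\in\mathcal{T}\big)$ and $\beta(\widehat{\theta}^K_{\mathcal{R}}) = P\big(L(\mathbf{x},\widehat{\theta}^K_{\mathcal{R}}) > \tau(\alpha,\widehat{\theta}^K_{\mathcal{R}}) \mid \mathbf{x}\in\mathcal{T}\big)$, both taken at the common level $\alpha$. The plan is to read this inequality off the two orderings already established: \Cref{le:L}, which places the mean of the CN statistic below that of the BN statistic under $H_1$, and \Cref{le:tau}, which places the CN threshold above the BN threshold. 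Intuitively, the CN statistic is on average smaller yet must clear a strictly higher bar, so it rejects $H_0$ no more often and the power can only decrease.

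To make this precise I would work in the asymptotic regime of \Cref{re:asymp}, where both LLR statistics are approximately Gaussian under $H_1$. First I would write each power as a Standard Normal tail probability: if $L(\cdot,\theta)$ has mean $\mu_1(\theta)$ and standard deviation $\sigma(\theta)$ under $\mathbf{x}\in\mathcal{T}$, then $\beta(\theta) = 1 - \Phi\big((\tau(\alpha,\theta)-\mu_1(\theta))/\sigma(\theta)\big)$, where $\Phi$ is the CDF of $\mathcal{N}(0,1)$ (equivalently, phrased through the quantiles $z_s$ of \Cref{th:bound}). Next I would substitute the two orderings $\mu_1(\widehat{\theta}^K_{\mathcal{R}}) \le \mu_1(\widehat{\theta}_{\mathcal{T}})$ from \Cref{le:L} and $\tau(\alpha,\widehat{\theta}^K_{\mathcal{R}}) \ge \tau(\alpha,\widehat{\theta}_{\mathcal{T}})$ from \Cref{le:tau}, so that the standardized cutoff for the CN statistic is at least that for the BN statistic; monotonicity of $\Phi$ then forces $\beta(\widehat{\theta}^K_{\mathcal{R}}) \le \beta(\widehat{\theta}_{\mathcal{T}})$. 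A shorter, though slightly less rigorous, route is to invoke \Cref{th:consistency} directly: since rejecting with the CN asymptotically implies rejecting with the BN, the CN rejection region is contained in the BN rejection region, and monotonicity of probability gives the claim at once.

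The main obstacle is the variance term $\sigma(\theta)$: comparing the standardized cutoffs requires controlling $\sigma(\widehat{\theta}^K_{\mathcal{R}})$ against $\sigma(\widehat{\theta}_{\mathcal{T}})$, and the two statistics differ precisely in the term $\log P(\mathbf{x}\mid\theta)$ whose fluctuation under $\mathbf{x}\in\mathcal{T}$ need not be identical. I would handle this by noting that in the standard LLR asymptotics underlying \Cref{th:bound} the null and alternative laws share a common variance tied to the separation of the two hypotheses, so the comparison reduces to the mean-and-threshold ordering above; alternatively, one sidesteps the issue entirely by appealing to Neyman--Pearson optimality, since $L(\cdot,\widehat{\theta}_{\mathcal{T}})$ is the genuine likelihood ratio between the $H_1$ law (parameter $\widehat{\theta}_{\mathcal{T}}$) and the $H_0$ law (parameter $\widehat{\theta}_{\mathcal{R}}$) and is therefore most powerful at level $\alpha$, whence every competing level-$\alpha$ test, the CN test included, has no greater power. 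Either way, the crux is to transport the ``in expectation'' orderings of \Cref{le:L} and \Cref{le:tau} into an ordering of tail probabilities.
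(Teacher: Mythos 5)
Your proposal is correct, and one of its branches is exactly the paper's own proof. The paper argues precisely along your ``shorter route'': it writes each power as an empirical mean of indicators over the target population,
\begin{equation*}
    \beta(\theta)=\frac{1}{|\mathcal{T}|}\sum_{\mathbf{x}\in \mathcal{T}} \mathbb{I}\bigl(\log P(\mathbf{x}\mid \theta) >\tau(\alpha, \theta)+l\bigr)\, ,
\end{equation*}
and chains two inequalities: first it swaps the statistic, replacing $\log P(\mathbf{x}\mid\widehat{\theta}^K_{\mathcal{R}})$ by the (asymptotically, per \Cref{re:asymp} and \Cref{le:L}) larger $\log P(\mathbf{x}\mid\widehat{\theta}_{\mathcal{T}})$ inside the indicator, and then it lowers the threshold from $\tau(\alpha,\widehat{\theta}^K_{\mathcal{R}})$ to $\tau(\alpha,\widehat{\theta}_{\mathcal{T}})$ using \Cref{le:tau}. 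This is nothing but the rejection-event containment of \Cref{th:consistency} followed by monotonicity of probability, which is what you propose. Note what this buys: since the comparison is made pointwise on indicators, no standardization is ever performed, so the variance obstacle that dominates your primary Gaussian route never arises. That obstacle is self-inflicted: writing $\beta(\theta)=1-\Phi\bigl((\tau(\alpha,\theta)-\mu_1(\theta))/\sigma(\theta)\bigr)$ forces a comparison of $\sigma(\widehat{\theta}^K_{\mathcal{R}})$ against $\sigma(\widehat{\theta}_{\mathcal{T}})$ that neither \Cref{le:L} nor \Cref{le:tau} supplies, and your appeal to a shared variance in the LLR asymptotics would itself require proof. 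Of your three routes, the containment one is the one to formalize.

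Your Neyman--Pearson route, by contrast, is genuinely different from the paper's argument and arguably cleaner: the CN attack has size exactly $\alpha$ by construction of its threshold, while $L(\cdot,\widehat{\theta}_{\mathcal{T}})$ is the exact likelihood ratio between the asymptotic $H_1$ law $P(\cdot\mid\widehat{\theta}_{\mathcal{T}})$ and the $H_0$ law $P(\cdot\mid\widehat{\theta}_{\mathcal{R}})$, so the optimality of the LLR test (which the paper itself invokes in \Cref{sec:prob_stat}) yields $\beta(\widehat{\theta}^K_{\mathcal{R}})\leq\beta(\widehat{\theta}_{\mathcal{T}})$ with no need for \Cref{le:L} or \Cref{le:tau} at all. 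Its cost is the same asymptotic identification the paper already accepts in \Cref{re:asymp}, namely that the law of $\mathbf{x}\in\mathcal{T}$ coincides in the limit with $P(\cdot\mid\widehat{\theta}_{\mathcal{T}})$, together with the usual caveat that exact attainment of level $\alpha$ on a finite support may require randomization, which is immaterial asymptotically.
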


    \begin{proof}
        From the definition of the test power, it holds the following:
        \begin{align}
            \beta(\widehat{\theta}^K_{\mathcal{R}}) &=
            P(\log P(\mathbf{x}\mid \widehat{\theta}^K_{\mathcal{R}}) >\tau(\alpha, \widehat{\theta}^K_{\mathcal{R}})+l\mid \mathbf{x}\in\mathcal{T}) \\
            &=  \frac{1}{|\mathcal{T}|}\sum_{\mathbf{x}\in \mathcal{T}} \mathbb{I}(\log P(\mathbf{x}\mid \widehat{\theta}^K_{\mathcal{R}}) >\tau(\alpha, \widehat{\theta}^K_{\mathcal{R}})+l)  \\
            &\leq  \frac{1}{|\mathcal{T}|}\sum_{\mathbf{x}\in \mathcal{T}} \mathbb{I}(\log P(\mathbf{x}\mid \widehat{\theta}_{\mathcal{T}}) > \tau(\alpha, \widehat{\theta}^K_{\mathcal{R}})+l)  \\
            &\leq  \frac{1}{|\mathcal{T}|}\sum_{\mathbf{x}\in \mathcal{T}} \mathbb{I}(\log P(\mathbf{x}\mid \widehat{\theta}_{\mathcal{T}}) > \tau(\alpha, \widehat{\theta}_{\mathcal{T}})+l) \\
            &=  P(\log P(\mathbf{x}\mid \widehat{\theta}_{\mathcal{T}}) >\tau(\alpha, \widehat{\theta}_{\mathcal{T}})+l\mid \mathbf{x}\in\mathcal{T}) \\ 
            &= \beta(\widehat{\theta}_{\mathcal{T}}) \,,
        \end{align}
        where $\mathbb{I}(\cdot)\in\{0,1\}$ is the \textit{indicator function} of its argument, and the inequalities exploit \Cref{re:asymp}.
    \end{proof} 

    For the final part of this section, assume the attacker knows how the released CN was learned, for instance, whether the local application of IDM or the $\varepsilon$-contamination was employed. In this scenario, the privacy gained by using CNs may be compromised, allowing $\widehat{\theta}_{\mathcal{T}}$ to be derived. 
    Specifically, Lemmas \ref{le:idm} and \ref{le:eps} address the research question \textbf{RQ2}.

    \begin{lemma}
        \label{le:idm}
        Consider a CN learned by the local application of IDM. The true BN can be recovered whenever the attacker knows $S$.
    \end{lemma}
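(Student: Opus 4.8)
The plan is to view the released locally specified CN as, for every node $X$, parent configuration $\pi_j$, and state $x_i$, an observed interval $[l_{ij}, u_{ij}]$ produced by the local IDM rule, together with the known DAG $\mathcal{G}$. Since recovering the BN $(\mathcal{G}, \widehat{\theta}_{\mathcal{T}})$ amounts to recovering every MLE entry $\widehat{\theta}_{ij} = n_{ij}/n_j$, it suffices to show that each such interval, once $S$ is known, determines the underlying counts $n_{ij}$ and $n_j$, and hence their ratio. I would therefore argue interval-by-interval and then assemble the local conditionals into the full parameter set.

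Concretely, fix one interval and drop the indices. By the local IDM formula the observed endpoints satisfy $l = n_{ij}/(n_j+S)$ and $u = (n_{ij}+S)/(n_j+S)$, which I read as two equations in the two unknown counts $n_{ij}$ and $n_j$ with $S$ treated as a known constant. The key step is to form the interval width $u - l = S/(n_j+S)$: this quantity isolates the joint dependence on $S$ and the parent count $n_j$, so knowledge of $S$ inverts it directly into $n_j = S\,\bigl(1-(u-l)\bigr)/(u-l)$. Substituting back into the lower endpoint recovers $n_{ij} = l\,(n_j+S)$, after which $\widehat{\theta}_{ij} = n_{ij}/n_j$ is immediate. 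Since $S$ is a single hyperparameter shared by every local credal set, one known number unlocks every interval, and iterating over all $(x_i,\pi_j)$ and all nodes reconstructs $\widehat{\theta}_{\mathcal{T}}$.

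The computation is elementary, so I expect the real work to be conceptual rather than algebraic. First, I would check the degenerate intervals — an empty cell gives $l=0$, and a deterministic cell gives $u=1$ — and verify that the width inversion still returns the correct $n_j$ and the correct boundary value of $\widehat{\theta}_{ij}$, so that no interval is lost. Second, and more importantly, I would pin down exactly where $S$ is indispensable: eliminating the counts between the two endpoints yields $\widehat{\theta}_{ij} = l/(1-u+l)$, which is already independent of $S$. This shows that the parameters in fact leak from the endpoints alone, so the lemma's sufficiency holds a fortiori; the genuine role of knowing $S$ is to additionally expose the absolute counts $n_{ij}, n_j$ and thus the sample size $|\mathcal{T}|$. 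Making this distinction explicit is the main subtlety, since it clarifies that $S$ is the quantity whose secrecy the defender must rely on, which also motivates the complementary analysis pursued afterward.
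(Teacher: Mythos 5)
Your proof is correct, and its core is exactly the paper's argument: read off the interval width $u-l = S/(n_j+S)$, invert it using the known $S$ to obtain $n_j$, then recover $n_{ij}$ from the lower endpoint and form $\widehat{\theta}_{ij}=n_{ij}/n_j$ (the paper phrases this with width $S/(N+S)$ for marginals, but the mechanism is identical). Where you go genuinely beyond the paper is the elimination step: your identity $\widehat{\theta}_{ij} = l/(1-u+l)$, which follows since $1-u+l = n_j/(n_j+S)$, shows that the MLE ratio is determined by the interval endpoints alone, with no knowledge of $S$ whatsoever; $S$ is needed only to expose the absolute counts $n_{ij}$, $n_j$ and hence $|\mathcal{T}|$. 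This is a real strengthening of the lemma (sufficiency holds a fortiori), and it has a consequence the paper does not draw: the paper's suggested countermeasures --- keeping $S$ and $N$ secret, or using a distinct $S_X$ per variable --- cannot prevent recovery of $\widehat{\theta}_{\mathcal{T}}$ from an IDM-learned CN, since they only hide the counts, not the parameters. Your handling of the degenerate cells is also the right caveat: when $n_j=0$ the interval is the vacuous $[0,1]$, the denominator $1-u+l$ vanishes, and the MLE is undefined anyway, so nothing is lost.
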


    Each parameter interval is of width $\frac{S}{N+S}$, whose value can be obtained by the difference of the interval extremes.
    Not only can $\widehat{\theta}_{\mathcal{T}}$ be obtained, but the sample size $|\mathcal{T}|$ can also be inferred, providing the attacker with absolute frequencies. 
    In the light of \Cref{le:idm}, knowledge about $S$ and $N$ must be kept secret while releasing the CN learned with IDM. Different $S_X$ could be used as an additional protection technique, e.g., one for each random variable $X\in\mathbf{X}$. 

    \begin{lemma}
        \label{le:eps}
        Consider an $\varepsilon$-contaminated CN defined in \Cref{sec:cn}. Then, the attacker can recover the underlying BN without additional information.
    \end{lemma}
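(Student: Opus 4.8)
The plan is to show that the single scalar $\varepsilon$ used to build the CN can itself be read off from the released credal sets, after which each original parameter follows by a simple rescaling; hence, unlike in \Cref{le:idm}, no secret analogue of the IDM constant $S$ needs to be hidden, and the recovery requires no side information.

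First I would fix one local credal set $K(X\mid\Pi_X=\pi)$, which by construction (\Cref{def:eps-cont}) is the $\varepsilon$-contamination of the local conditional $p(\cdot\mid\pi,\widehat{\theta}_{\mathcal{T}})$. Writing $\theta_i=p(x_i\mid\pi,\widehat{\theta}_{\mathcal{T}})$ for $i=1,\dots,k$, I would compute the lower envelope $\underline{p}(x_i):=\min_{r\in K(X\mid\pi)}r(x_i)$ for each state. The defining constraints force $r(x_i)\geq(1-\varepsilon)\theta_i$, so $\underline{p}(x_i)\geq(1-\varepsilon)\theta_i$; the matching bound follows by exhibiting a feasible $r$ that saturates coordinate $i$. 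Placing all the remaining slack on the other coordinates requires distributing mass $1-(1-\varepsilon)\theta_i$ subject to the lower bounds $(1-\varepsilon)\theta_j$ for $j\neq i$, which is feasible precisely because $1-(1-\varepsilon)\theta_i\geq(1-\varepsilon)(1-\theta_i)$, i.e. because $\varepsilon\in(0,1)$. Thus $\underline{p}(x_i)=(1-\varepsilon)\theta_i$, a quantity the attacker reads directly from the released set (whether it is given by its extreme points or by linear constraints).

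The key step is then to sum these lower envelopes over the states of $X$. Since the underlying parameters form a genuine distribution, $\sum_{i=1}^{k}\theta_i=1$, and therefore
\begin{equation}
    \sum_{i=1}^{k}\underline{p}(x_i)=(1-\varepsilon)\sum_{i=1}^{k}\theta_i=1-\varepsilon\, .
\end{equation}
Hence $\varepsilon=1-\sum_{i=1}^{k}\underline{p}(x_i)$ is recovered from the credal set alone, and each parameter is reconstructed as $\theta_i=\underline{p}(x_i)/(1-\varepsilon)$. Repeating this over every node and every parent configuration recovers $\widehat{\theta}_{\mathcal{T}}$ in full, so the underlying BN is exposed with no auxiliary knowledge. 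The argument even localizes if a distinct $\varepsilon$ were used per variable or per parent configuration, since each $\varepsilon$ is pinned down by its own envelope sum; this is exactly what distinguishes $\varepsilon$-contamination from the IDM case, where the analogous normalization cannot separately identify both $S$ and the counts.

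The main obstacle I anticipate is not the algebra but pinning down exactly what the attacker ``observes'': the recovery hinges on the lower envelope being both attained and legible from the released representation. The feasibility check in the second step guarantees attainment, and convexity of $K(X\mid\pi)$ ensures the per-coordinate minimum is well defined for any standard encoding of the set; once these are stated, the normalization identity does the rest and hands the attacker $\varepsilon$ for free.
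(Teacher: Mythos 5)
Your proof is correct and takes essentially the same route as the paper: the released credal set determines $\varepsilon$ and the original parameters transparently, so the attacker simply inverts the contamination map, coordinate by coordinate, for every node and parent configuration. The only difference is cosmetic --- the paper extracts $\varepsilon$ as the common width of each parameter interval (the gap between the upper envelope $(1-\varepsilon)\theta_i+\varepsilon$ and the lower envelope $(1-\varepsilon)\theta_i$), whereas you recover it from the normalization identity $\sum_i \underline{p}(x_i)=1-\varepsilon$; both follow at once from the same envelope computation, and your explicit feasibility check for attainment, plus the remark on why the analogous sum fails to separate $S$ from $N$ in the IDM case, makes your write-up a slightly more self-contained version of the paper's one-line justification.
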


    By exploiting Eq.~\eqref{eq:econtamin}, each parameter interval is uniquely defined by the original parameter value and $\varepsilon$, which is the interval width.
    Trade-offs between the uncertainty included in the network and the privacy to be preserved depend on the specific case. Intuitively, larger parameter intervals lead to greater privacy, but also result in a decrease in the model's utility. Depending on $S$, or $\varepsilon$, we can mark two extreme situations:

    \begin{enumerate}
        \item $\widehat{K}_{\mathcal{T}}(\mathbf{X}) \equiv \{\widehat{\theta}_{\mathcal{T}}\}$. In this case, we release a single BN, and the power of a tracing attack is approximated by \Cref{th:bound} \cite{Murakonda_2021}. Hence, we release an unmasked model, thereby obtaining the minimum privacy.
        \item $\widehat{K}_{\mathcal{T}}(\mathbf{X}) \equiv \Theta$. In this case, each parameter interval is $[0,1]$, thus only the DAG is released. Under \Cref{def:tracing_CN}, the LLR statistic is constantly zero. Hence, we release the least information while ensuring the highest level of privacy against tracing attacks.\footnote{The DAG itself may still reveal sensitive information \cite{Wang_2020}.}
    \end{enumerate}

\section{Experiments}
    \label{sec:exp}
    We conduct experiments to answer research question \textbf{RQ3}.
    In particular, we aim to empirically estimate how enhanced privacy, i.e., the power $\beta$ of the test, varies based on: (i) the complexity of the DAG, and (ii) the uncertainty introduced in the CN. This section investigates the local application of IDM only, and not the $\varepsilon$-contaminated class of CNs, due to the pointed concerns regarding privacy breaches (\Cref{sec:pr_g}). Hence, the uncertainty is represented by a global constant $S$ (\Cref{sec:cn}), and the attacker does not necessarily know how the CN was learned. 
    We note that, even for large networks, the experiments remain computationally feasible. We make our source code available for reproducibility and research purposes.\footnote{\href{https://github.com/Niccolo-Rocchi/BN-Privacy}{https://github.com/Niccolo-Rocchi/BN-Privacy}}
    
\subsection{Experimental Setting}

    We consider all DAGs derived from the combinations of: number of nodes $M\in\{10, 20, 30, 50\}$, edge density $E_d\in\{2, 3, 4\}$. The latter denotes the average number of ingoing edges per node; thus, the total number of edges in each network is $E=M\cdot E_d$. The consideration of large networks stems from practical frameworks, such as omics data modeling \cite{Suter_2022}.
    The following presents the experimental pipeline for a given DAG $\mathcal{G}$ with $M$ nodes and $E$ edges.
    \begin{figure*}[h!]
        \centering        \includegraphics[width=\linewidth]{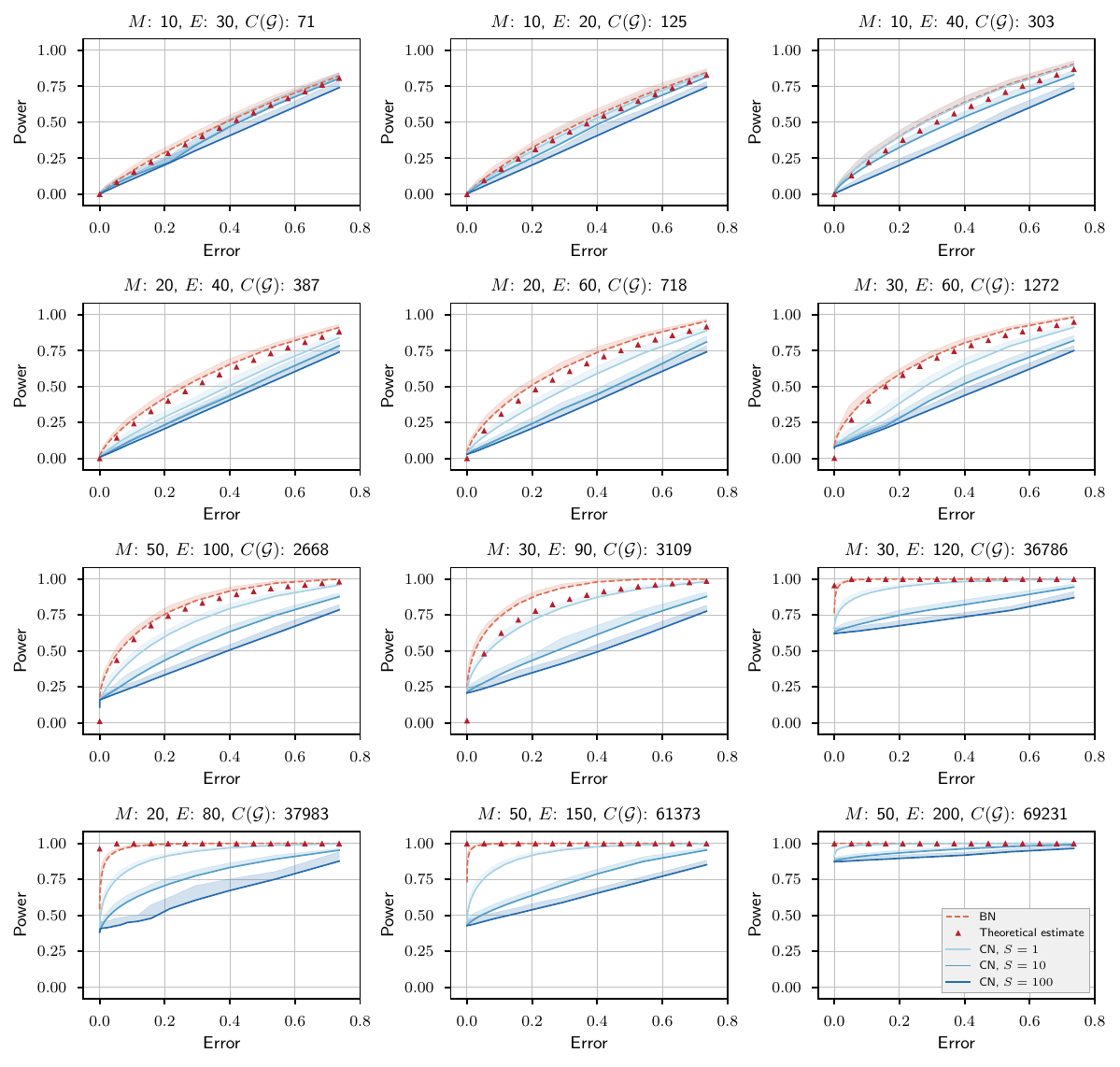}
        \caption{Error vs. power rates for experimental tracing attacks across various node (rows) and edge (columns) configurations. For each setup, $\mathcal{T}$ and $\mathcal{R}$ are sampled \numprint{200} times from $\mathcal{P}$. Lines (solid or dashed) represent average power, while shaded areas indicate the maximum power achieved across those samples.}
        \label{fig:results}
    \end{figure*}

\paragraph{Data generation}
    A BN $(\mathcal{G}, \theta_0)$ over discrete variables (maximum 3 categories each) is randomly generated and taken as the ground truth for the data generation process. 
    Then, we sample \numprint{5000} unique individuals from $(\mathcal{G}, \theta_0)$ and consider them as the general population $\mathcal{P}$.    
    The reference and target population $\mathcal{R}, \mathcal{T}$ are randomly sampled from $\mathcal{P}$ with sample size ratios of $0.5$ and $0.25$, respectively, ensuring no overlap. 
    Subsampling from $\mathcal{P}$ is performed \numprint{200} times to ensure variability in the sampling. 
\paragraph{Tracing attack against BN}
    Given a sequence of Type I errors $\alpha_1, \hdots,\alpha_k \in [10e^{-4}, 0.631]$, $k=30$, the aim is to compute the corresponding power of a tracing attack, i.e, $\beta_1, \hdots,\beta_k$.
    First, for each $\alpha_i$, we compute the theoretical estimate of $\beta_i$ as given by \Cref{th:bound}, by letting $C(\mathcal{G})$ be the complexity of $\mathcal{G}$ and $|\mathcal{T}|$ the target sample size.
    To conduct the attack against the BN, we perform the MLE estimation of $(\mathcal{G}, \widehat{\theta}_{\mathcal{R}})$ and $(\mathcal{G}, \widehat{\theta}_{\mathcal{T}})$ from $\mathcal{R}$ and $\mathcal{T}$, respectively. Recall that $\widehat{\theta}_{\mathcal{T}}$ is released by the modeler and is sensitive information, while $\widehat{\theta}_{\mathcal{R}}$ is always accessible to the attacker.
    Then, for each $\alpha_i$, we perform the tracing attack by using the statistics $L(\cdot, \widehat{\theta}_{\mathcal{T}})$. In particular, for a fixed $\alpha_i$, the test considers any data point $\mathbf{x}\in\mathcal{P} \setminus \mathcal{R}$ and assign either $\mathbf{x}\notin\mathcal{T}$ or $\mathbf{x}\in\mathcal{T}$. The true positive rate $\beta_i$ is computed for any error level $\alpha_i$.

\paragraph{Tracing attack against CN}
    The tracing attack against the CN is performed in a manner similar to that described in the previous paragraph. The only difference consists in estimating $\widehat{K}_{\mathcal{T}}(\mathbf{X})$, released by the modeler instead of $\widehat{\theta}_{\mathcal{T}}$. The credal set is estimated from the target population for a fixed $S$.
    The attacker, having access to $\widehat{K}_{\mathcal{T}}(\mathbf{X})$, estimates $\widehat{\theta}^K_{\mathcal{T}}$ by optimization techniques.
    Then, for each error level $\alpha_i$, the tracing attack is performed on $\mathcal{P} \setminus \mathcal{R}$ by employing the statistics $L(\cdot, \widehat{\theta}^K_{\mathcal{R}})$, and the true positive rate $\beta_i$ is computed.
    
\subsection{Results \& Discussion}

    Results are shown in \Cref{fig:results}.
    We report results for CNs learned by the local IDM with $S\in\{1,10,100\}$. We also considered $S=1000$, however, it gave results compatible with $S=100$, hence it is not reported here.

    From the attacker's perspective, the \textit{optimal curve} would be L-shaped towards the top-left corner; in other words, the attack that achieves the maximum power value $\beta=1$ when the error level is set to its minimum value $\alpha=0$.
    From the privacy-preserving perspective, instead, a graphical model ensures complete privacy when its curve is L-shaped towards the bottom-right corner of the plot; in other words, a power $\beta=0$ for any Type I error $\alpha<1$ and a power $\beta=1$ only for $\alpha=1$. 
    
    Our results demonstrate that CNs are not less private than BNs, as the CN curves are consistently below, or coincide with, the BN curves.
    This statement is further supported by the observation that the maximum power achieved with CNs does not intersect the BN curves at higher DAG complexities and lower errors.
    Hence, CNs are more private than BNs in most experiments. The area between each pair of BN-CN curves indicates the privacy gained by exposing the imprecise model instead of the precise one. We observe that this area does not necessarily increase with network complexity. However, it increases with the uncertainty injected into the CN, allowing us to modulate privacy by tuning the $S$ hyperparameter.
    
    As noted above, choosing $S=1000$ does not yield privacy benefits relative to $S=100$, albeit to a small extent in more complex graphs.
    This suggests that the privacy gained in CNs may be upper-bounded, or, more generally, dependent on both the sample size and the model complexity. 
    Nevertheless, the benefits from even smaller $S$ are visible. As a consequence, by introducing a small amount of uncertainty into the CN, we can still achieve precise inferences (indicating high model's utility) while ensuring poorer attack power (indicating high privacy). Thus, one could regard a CN learned via the local IDM with $S=1$ or $S=10$ as already a reasonable balance between privacy and the model's utility, although further research is warranted.

    Finally, we observe the BN curves adhere to the theoretical estimate in most experiments; however, this estimate might not be considered as \textit{upper bound} to the loss of information in BNs, as hinted in Murakonda et al. \cite{Murakonda_2021}.

\section{Conclusions}
    This work introduces a promising and ready-to-use approach to the issue of privacy-aware parameter learning in BNs.
    In particular, we adopt the imprecise probability theory to obfuscate the BNs' parameters before their release; the degree of introduced uncertainty effectively hinders tracing attacks despite not neutralizing them completely. Contrary to a noisy BN, the released CN retains significance regarding the actual BN parameters. It thus keeps its utility for further inferences, as one can use the CN to issue provably correct inferential bounds, which are impossible after adding noise to a BN.

    Our framework is directly relevant to healthcare applications, where data privacy is paramount and data sets are often small and fragmented across institutions.
    It also aligns well with federated learning scenarios, where individual data sources, such as hospitals, collaboratively train AI models. While most of the existing federated learning research focuses on the structure learning task, little attention has been given to the parameter learning task, especially under population heterogeneity. We hope this work can begin a research direction to close that gap.

\section{Future Work}
    
    We believe we can achieve sharper results than \Cref{th:consistency} and stronger guarantees than \Cref{th:power} by investigating the non-asymptotic behavior of the log-likelihood and LLR functions; the experimental results also support this argument. Nevertheless, the paper's insights are already applicable to practice and pave the way for a new line of research involving CNs and privacy.
    Moreover, we aim to identify other sensitive information than that provided in \Cref{sec:pr_g}, to prevent the reconstruction of the masked BN. 
    Motivated by our findings, a fruitful research direction involves providing a theoretical approximation of the attack effectiveness against CNs, similar to the one discussed in Murakonda et al. \cite{Murakonda_2021}.
    
    On the empirical side, we recognize the need for more ablation studies to evaluate the sensitivity of results across different networks, sample sizes, and uncertainty levels. 
    Moreover, while CNs achieve higher utility than noisy BNs in providing correct inferential bounds, this is primarily a qualitative property. An empirical evaluation of the accuracy and calibration of both models is an ongoing work that we are addressing.
    Future work should also investigate the applicability of our framework to alternative classes of CNs, as well as different attack techniques.
    Finally, we will evaluate and fine-tune our methodology using real-world data as soon as suitable and relevant scenarios become available.    

\section*{Acknowledgments}
    This work was supported by the MUR under the grant ``Dipartimenti di Eccellenza 2023-2027'' of the Department of Informatics, Systems and Communication of the University of Milano-Bicocca, Milan, Italy, and by the National Plan for NRRP Complementary Investments (Project n. PNC0000003 - AdvaNced Technologies for Human-centrEd Medicine (ANTHEM)).
    Niccolò Rocchi is funded by NextGeneration EU and Fondazione IRCCS Istituto Nazionale dei Tumori, Milan, Italy, in the Horizon Europe project IDEA4RC framework.
    Cassio de Campos thanks the support from the Eindhoven Artificial Intelligence Systems Institute, EU European Defence Fund via project KOIOS (EDF-2021-DIGIT-R-FL-KOIOS), and the Dutch Research Council (NWO) via project NGF.1609.242.024.

\bibliographystyle{plain}
\bibliography{refs}

\end{document}